 \theoremstyle{definition}
\newtheorem{definition}{Definition}
\theoremstyle{theorem}
\newtheorem{theorem}{Theorem}
\newtheorem{lemma}{Lemma}
\newtheorem{property}{Property}
\newcommand{\ADV}{\texttt{ADV}}
\newcommand{\MIA}{\texttt{MIA}}
\newcommand{\GAN}{\texttt{GAN}}
\newcommand{\Diff}{\texttt{Diff}}
\newcommand{\defn}{\ensuremath{:  =}}
\title{On the MIA Vulnerability Gap Between Private GANs and Diffusion Models}
\author{
    Ilana Sebag\textsuperscript{\rm 1,2}, 
    Jean-Yves Franceschi\textsuperscript{\rm 1}, 
    Alain Rakotomamonjy\textsuperscript{\rm 1}, 
    Alexandre Allauzen\textsuperscript{\rm 2,3}, 
    Jamal Atif\textsuperscript{\rm 2}
}
\begin{document}

\maketitle

\begin{abstract}
Generative Adversarial Networks (GANs) and diffusion models have emerged as leading approaches for high-quality image synthesis. While both can be trained under differential privacy (DP) to protect sensitive data, their sensitivity to membership inference attacks (MIAs), a key threat to data confidentiality, remains poorly understood. In this work, we present the first unified theoretical and empirical analysis of the privacy risks faced by differentially private generative models. We begin by showing, through a stability-based analysis, that GANs exhibit fundamentally lower sensitivity to data perturbations than diffusion models, suggesting a structural advantage in resisting MIAs. We then validate this insight with a comprehensive empirical study using a standardized MIA pipeline to evaluate privacy leakage across datasets and privacy budgets. Our results consistently reveal a marked privacy robustness gap in favor of GANs, even in strong DP regimes, highlighting that model type alone can critically shape privacy leakage.

\end{abstract}


\section{Introduction}

Generative models have become crucial in machine learning. Among leading generative architectures, GANs \cite{goodfellow2014generativeadversarialnetworks} and diffusion models \cite{ho2020denoisingdiffusionprobabilisticmodels, song2021scorebasedgenerativemodelingstochastic, karras2022elucidatingdesignspacediffusionbased} dominate high-fidelity image synthesis. As these models are increasingly deployed in sensitive domains, their ability to memorize and reproduce training data raises serious privacy concerns, making protection against data leakage essential.

Differential Privacy (DP) \citep{inproceedingsdwork6cr, articledworkprivacyacm11, 10.1561/0400000042} provides a rigorous framework to mitigate this risk, ensuring that a model's output is statistically indistinguishable when altering a single training data point. In practice, DP is commonly implemented via differentially private stochastic gradient descent (DP-SGD) \citep{Abadi_2016}, which clips per-sample gradients and adds calibrated noise during training.

While both GANs and diffusion models can be trained with DP-SGD, their vulnerability to membership inference attacks (MIAs), which aim to determine whether a given sample was used during training, remains poorly understood \citep{shokri2017membership, carlini9833649}.
Moreover, empirical findings in the non-private setting suggest that GANs leak less membership information than diffusion models \citep{carlini2023extractingtrainingdatadiffusion}, raising a central yet unresolved question: \emph{does this gap persist under formal privacy training, and if so, why}?

In this work, we present the first unified theoretical and empirical study of membership leakage in differentially private generative models. To our knowledge, no prior work has analyzed how the training procedure affects data leakage under DP in the context of MIAs. We show in particular that DP-diffusion models are more vulnerable to such attacks than DP-GANs.

Our analysis builds on the notion of uniform stability \cite{bousquet2002stability, hardt2016trainfastergeneralizebetter}, which quantifies how much a model’s output changes when a single training point is replaced. We formally relate this stability to membership inference risk by bounding the adversarial advantage in terms of the model’s stability constant.
Crucially, we show that model’s stability is determined by its training dynamics. While DP-GANs apply DP-SGD only to the discriminator, diffusion models use DP-SGD to train a denoiser under a weighted multi-pass denoising objective. The main source of instability in diffusion models lies in the large loss weights assigned to low-noise denoising terms, which amplify the effect of small parameter changes. As a result, we prove that DP-diffusion models exhibit significantly lower stability and therefore leak more membership information under the same privacy budget.

We validate these insights empirically using a standardized evaluation pipeline. We train multiple instances of GANs and diffusion models in under comparable conditions, notably with the same DP-SGD mechanism and privacy budget $\varepsilon$, and conduct attacks using a consistent shadow-model framework \cite{shokri2017membership}, relying on loss or logits based scoring in a black-box setting. Beyond validating the theory, this constitutes (to our knowledge) the first systematic assessment of membership leakage in differentially private generative models; prior work introducing DP-GANs and DP-diffusion has not assessed their vulnerability to membership inference. 
  
Under identical privacy budgets, we observe consistent gaps in leakage between DP-GANs and DP-diffusion, indicating that the privacy parameter $\varepsilon$ alone does not fully characterize risk. Training architecture is a critical, yet often overlooked, factor of privacy leakage in differentially private generative models. Despite typically higher sample quality, diffusion models exhibit greater membership leakage under DP, whereas GANs are more robust, highlighting a trade-off between fidelity and privacy that has been largely overlooked. These results highlight the importance of evaluating private generative models not only in terms of output quality or reported $(\varepsilon,\delta)$ values, but also through architecture-driven stability and empirical leakage metrics, which provide complementary insights into privacy risk.

\section{Background and Related Work}

\paragraph{Notations.}
Let $\mathcal{X}$ denote the input space and \(\mathcal{Y}\) the output space.  Let \(D = \{x_i\}_{i=1}^m \in \mathcal{D}\) be the training dataset drawn i.i.d.\ from an unknown distribution \(\mathcal{P}\).
A learning algorithm is a map $f : \mathcal{D} \to \mathcal{F} \subset \mathcal{Y}^\mathcal{X}$, that assigns to each training set a hypothesis \(f_D \in \mathcal{F}\), where \(f_D : \mathcal{X} \to \mathcal{Y}\) is the learned model. We assume \(f\) is symmetric with respect to the ordering of samples. For any \(i \in \{1, \dots, m\}\), we write \(D^{\setminus i} = D \setminus \{x_i\} \in \mathcal{D}\) for the neighboring dataset obtained by removing one example. We denote by \(\ell(f, z)\) the per-sample training loss incurred by model \(f\) on example \(z\), and by \(s_f(x)\) a scalar \emph{attack score} computed from the model \(f\) on input \(x\).

\subsection{Differential Privacy}
\label{subsec: DP prelims}

\begin{definition}
    A random mechanism $\mathcal{M}\colon \cal D \rightarrow \cal R$  is $(\varepsilon, \delta)$-DP if  for any two adjacent datasets $D, D' \in \cal D$ differing in at most one element and any for any subset of outputs $\mathcal{S} \subseteq \cal R$, 
    \begin{align}
    \mathbb{P}[ \mathcal{M}(D) \in \mathcal{S}] \leq \mathrm{e}^\varepsilon \mathbb{P}[\mathcal{M}(D') \in \mathcal{S}] + \delta.
\end{align}
\end{definition} 

Adjacent inputs refer to datasets differing only by a single record. DP ensures that when a single record in a dataset is swapped, the change in the distribution of model outputs will be controlled by $\varepsilon$ and $\delta$. $\varepsilon$ controls the trade-off between the level of privacy and the usefulness of the output, where smaller $\varepsilon$ values offer stronger privacy but potentially lower utility (e.g.\ in our specific case, low-quality generated samples). 

A classical example of a DP mechanism is the Gaussian mechanism operating on a function $f\colon \mathcal{D} \to {\mathbb{R}}^d$ as:
\begin{align}
    \mathcal{M}_f(D) = \mathcal{N}(f(D), \sigma^2\mathcal{I}_d).
\end{align}
We define the $\ell_2$ sensitivity of $f$ as  $\Delta_2(f) \defn \max_{D,D' \text{: adjacent}\in \mathcal{D} } \| f(D)-f(D')\|_2$. 
For $c^2 > 2\ln(1.25/\delta)$ and $\sigma \geq c \frac{\Delta_2(f)} {\varepsilon}$, the Gaussian mechanism is $(\varepsilon, \delta)$-DP \citep{10.1561/0400000042}.
In deep learning, differential privacy is most commonly enforced using \emph{Differentially Private Stochastic Gradient Descent} (DP-SGD), introduced by \citet{Abadi_2016}. The goal of DP-SGD is to ensure that each training example has a limited influence on the learned model. To achieve this, each training step involves computing per-sample gradients, clipping their norms to a fixed threshold, adding Gaussian noise, and performing a standard SGD update. This mechanism guarantees \((\varepsilon, \delta)\)-DP over the course of training, where the overall privacy loss is tracked using a composition accountant such as the moment accountant (introduced in the same work). The privacy budget \(\varepsilon\) accumulates over iterations and depends on the batch size, number of steps, and  the noise scale $\sigma$.

\subsection{Differentially Private GANs and Diffusion Models}
\label{subsec:dp_gans_diff}

Differential privacy has recently been applied to generative models, with most approaches relying on DP-SGD to perturb gradient updates during training. This subsection reviews how DP-SGD is integrated into two leading generative frameworks: GANs and diffusion models. In GANs \cite{goodfellow2014generativeadversarialnetworks}, adversarial objectives are optimized under privacy constraints \citep{xie2018differentially, NEURIPS2020_9547ad6b, NEURIPS2021_171ae1bb}, while diffusion models are adapted by injecting noise into gradient updates across denoising steps \citep{dockhorn2023differentially, ghalebikesabi2023differentiallyprivatediffusionmodels}. However, how DP-SGD interacts with these training procedures and impacts privacy leakage remains poorly understood. We address this question through the lens of membership inference and algorithmic stability.

\subsubsection{GANs under Differential Privacy.}
A GAN consists of a generator \( G_\phi(z) \) that maps latent vectors \( z \sim p_z \) to samples, and a discriminator \( D_\psi(x) \) that distinguishes real from generated data. In our setting, \( D_\psi(x) \in \mathbb{R} \) returns a logit, with the sigmoid \( \sigma(u) = 1 / (1 + e^{-u}) \) applied inside the loss function. Their parameters $\phi$ and $\psi$ are optimized through the following minmax objective:
\begin{align}
\nonumber \min_{\phi} \max_{\psi} \ &\mathbb{E}_{x \sim p_{\text{data}}}[\log \sigma(D_\psi(x))] \\
&+  \mathbb{E}_{z \sim p_z}[\log(1 - \sigma(D_\psi(G_\phi(z))))]
\label{eq:gan-objective}
\end{align}

 In the DP setting, only the discriminator accesses real data and is trained with DP-SGD. The generator receives updates exclusively through backpropagation from the discriminator, and thus qualifies as post-processing. As a result, its updates incur no additional privacy cost. This decoupling is key to the relative robustness of GANs under DP, and is further explored in \citet{bie2023privategansrevisited}, which proposes techniques for stabilizing GAN training in this regime.

\subsubsection{Diffusion Models under Differential Privacy.}
Diffusion models \cite{song2020generativemodelingestimatinggradients} synthesize data by reversing a stochastic process that gradually corrupts clean images with Gaussian noise. Given a sample \( x_0 \sim p_{\text{data}} \),  the forward process generates noisy inputs \( x_\sigma \) as: $x_\sigma = x_0 + \sigma \epsilon, \quad \epsilon \sim \mathcal{N}(0, \mathcal{I})$, where \( \sigma \in [\sigma_{\min}, \sigma_{\max}] \) denotes the noise level. The model \( \epsilon_\theta(x_\sigma, \sigma) \) is trained to predict \( \epsilon \) using a denoising loss. Following the EDM formulation of \citet{karras2022elucidatingdesignspacediffusionbased}, the training loss is expressed as follows: \begin{align}
\mathbb{E}_{x_0, \sigma, \epsilon} \left[
\lambda(\sigma) \cdot \left\| \epsilon_\theta(x_0 + \sigma \epsilon, \sigma) - \epsilon \right\|^2
\right],
\label{eq:edm-true-loss}
\end{align}
where the EDM weighting function $\lambda(\sigma) = \frac{\sigma^2 + \sigma_{\text{data}}^2}{(\sigma \cdot \sigma_{\text{data}})^2}$ reweights contributions across noise levels. In the DP setting, Equation~\ref{eq:edm-true-loss} is approximated by sampling \( K \) independent noise levels \( \{\sigma_k\}_{k=1}^K \) and corresponding \( \epsilon_k \) for each training example, resulting in a Monte Carlo approximation. This formulation, referred to as the \emph{noise multiplicity} approach by \citet{dockhorn2023differentially}, takes the form:
\begin{align}
 \frac{1}{K} \sum_{k=1}^K \lambda(\sigma_k) \cdot \left\| \epsilon_\theta(x_0 + \sigma_k \epsilon_k, \sigma_k) - \epsilon_k \right\|^2,
\label{eq:edm-loss}
\end{align}
Each training example contributes $K$ noise-conditioned loss evaluations per step. The per-sample gradients are averaged to reduce variance which must be accounted for in DP-SGD.

\subsection{Membership Inference Attacks}
\label{subsec:MIAprelims}

Membership inference attacks (MIAs) aim to determine whether a particular data point was used to train a machine learning model \cite{shokri2017membership}. Given a model \( f \) trained on a dataset \( D \) and a sample \( x \), the adversary \( \mathcal{A} \) attempts to infer whether \( x \in D \) (a \emph{member}) or \( x \notin D \) (a \emph{non-member}). Formally, the attack is framed as a binary decision function:
\begin{equation}
\mathcal{A}: x \mapsto \mathcal{C}(s_f(x)) \in \{0, 1\},
\end{equation}
where $\mathcal{C}$ is a classifier, \( s_f(x) \in \mathbb{R} \) is a scalar \emph{attack score} extracted from the model's behavior on input \( x \). The attack score, which is model-specific, quantifies the model’s confidence or sensitivity on input \( x \) and is used by the attacker to infer membership via a classifier $\mathcal{C}$. In GAN-based attacks, it is often computed from the discriminator’s raw logit, e.g. $ s_f(x) = D_\psi(x)$ \cite{Chen_2020}, which reflects the discriminator’s confidence that \( x \) is real. 

In diffusion models, the attack score is typically the scalar denoising loss: $s_f(x) = \mathbb{E}_{\epsilon, \sigma} \left\| \epsilon_\theta(x + \sigma \epsilon, \sigma) - \epsilon \right\|^2$ \cite{carlini2023extractingtrainingdatadiffusion}, 
which measures how well the model reconstructs noisy versions of \( x \). In both cases, members tend to have different scores (higher confidence or lower reconstruction error), enabling the attacker to distinguish them from non-members.

To quantify the effectiveness of such MIAs, we use the following definition of attacker advantage.
\begin{definition}[Attacker advantage, \citet{yeom2018privacyriskmachinelearning}] 
\label{advmia}
The attacker advantage quantifies the gap between true and false positive rates in membership inference:
\begin{equation} \label{advmiaeq}
     \ADV_\text{MIA} = \mathbb{P}[\mathcal{A}(x)=1 \mid x \in D] - \mathbb{P}[\mathcal{A}(x)=1 \mid x \notin D],
\end{equation}
where \( \mathcal{A}(x) \) is the attacker's decision on whether \( x \) is in the training set. A value of \( \ADV_\text{MIA} = 0 \) indicates perfect privacy: the attacker performs no better than random guessing. Higher values reflect greater privacy leakage, as the attacker can better distinguish training from non-training samples.
\end{definition}

A key factor enabling MIAs is the behavioral gap between training and unseen data \citep{shokri2017membership, yeom2018privacyriskmachinelearning}. A standard approach to exploit this gap is \emph{shadow modeling}, where the adversary trains auxiliary models on disjoint datasets with known membership labels to mimic the target model's behavior. These shadow models generate scores used to train a membership classifier \( \mathcal{C} \) that learns to distinguish members from non-members. Originally introduced in the black-box setting \citep{shokri2017membership}, shadow modeling has since been adapted to scenarios where the attacker has partial knowledge of the target’s architecture or training procedure \citep{Chen_2020, 8835245}.

To better understand what drives membership leakage, we now turn to the notion of algorithmic stability.
\subsection{Algorithmic Stability}
\label{subsec:stability}
Algorithmic stability measures how much a learning algorithm's output changes in response to small perturbations in the training data. It is a classical tool for understanding generalization \citep{bousquet2002stability,hardt2016trainfastergeneralizebetter}, and more recently, it has emerged as a key concept in quantifying privacy leakage.

In the context of membership inference, the link is intui\-tive: if a membership inference attack succeeds on a model, then its predictions must change noticeably between training and unseen examples. This suggests that small changes to the training data, like removing a single example, can influence the model's output. In contrast, a model whose predictions remain consistent when the data is slightly modified is more likely to resist such attacks.

To analyze the stability of a learning algorithm that maps a dataset to a function $f$ (which may represent the full model or a specific component trained by the algorithm), we require a metric to evaluate the quality of the function’s output. We therefore define a loss function $\ell(f,z) \in \mathbb{R}$, where $z$ denotes a data sample. Depending on the setting, $z$ can be either $z=(x,y)$, in which case $\ell(f,z) = c(f(x),y)$, or $z=(x)$, 
in which case $\ell(f,z) = c(f(x),x)$, where $c(\cdot,\cdot)$ is a cost function.

\begin{definition}[Uniform stability, \citet{bousquet2002stability}]
\label{def:stability}

The function \( f \) is \emph{\( \beta \)-uniformly stable} with respect to a loss function \( \ell \) if, for any training set \( D \) of size \( m \), and any index \( i \),
\begin{equation}
\left\| \ell(f_D, \cdot) - \ell(f_{D^{\setminus i}}, \cdot) \right\|_\infty \leq \beta.
\end{equation}
where $f_D$ and $f_{D^{\setminus i}}$  are hypothesis functions obtained
by training the algorithm with respectively dataset $D$ and 
$D^{\setminus i}.$ Uniform stability quantifies how sensitive a learning algorithm is to changes in a single training point. 

\end{definition}

Recent work has formalized the link between stability and privacy attacks. \citet{yeom2018privacyriskmachinelearning} show that high empirical advantage in a membership inference attack implies instability of the learning algorithm, and vice versa. Moreover, \citet{carlini9833649} further argue that instability is often exacerbated in overparameterized or poorly regularized models (conditions that commonly arise in generative modeling). These insights motivate the use of stability analysis as a tool for explaining privacy leakage.

In this work, we use uniform stability to compare the privacy properties of differentially private GANs and diffusion models. By analyzing how their outputs react to the removal of a single training point, we derive theoretical bounds on membership advantage.

\section{Stability–Based Analysis of MIA Risk in DP GANs and DP Diffusion Models}
\label{sec:theory}
Empirically, non-private GANs leak less membership information than diffusion models~\citep{carlini2023extractingtrainingdatadiffusion}. We provide a formal theoretical explanation grounded in \emph{uniform stability}, comparing the GANs and Diffusion models in the private setting.

We proceed in three steps. First, we show that the attack scores are Lipschitz with respect to the training loss (Props.~\ref{gl},~\ref{dl}), implying that loss stability transfers to score stability (Lemma~\ref{lem:score-stability}). Second, under a bounded score density, score stability bounds the membership advantage of any threshold attack (Thm.~\ref{thm:adv-bound}). Third, we derive a general DP-SGD stability bound (Lemma~\ref{lemma:dp-sgd-stability}) and instantiate it for GANs and diffusion (Lemmas~\ref{lemma:dp-gan-stability},~\ref{lemma:dp-diff-stability}) to compare privacy leakage between both models.

\subsection{Linking Uniform Stability to Attack Scores}

Uniform stability (Definition~\ref{def:stability}) bounds the \emph{loss} drift when a single training point is removed. To translate this into a bound on any \emph{attack score}, we introduce a regularity property connecting the training loss and the attack score. This property enables us to formally relate the algorithm’s training stability to the success of an MIA.

\begin{property}[Loss–score Lipschitz link for GANs]
\label{gl}
Let \( f = D_\psi \in \mathcal{F}_{\mathrm{GAN}} \) be a discriminator parameterized by \( \psi \), trained using the logistic loss. For any input \( x \in \mathcal{X} \) and label \( y \in \{-1, +1\} \), define:
\begin{itemize}
    \item The \emph{score} used by the attacker is the raw logit: \( s_f(x) := D_\psi(x) \).
    \item The \emph{training loss} is the logistic loss: \( \ell(f, x, y) := \log(1 + e^{-y\,s_f(x)}) \).
\end{itemize}

Assume the loss values lie in a compact interval \( [a, b] \subset \mathbb{R}_{>0} \). Then the map \( f \mapsto s_f(x) \) is Lipschitz with respect to \( \ell(f, x, y) \), with
\begin{equation}
|s_f(x) - s_{f'}(x)| \le L_s \cdot |\ell(f, x, y) - \ell(f', x, y)|,
\end{equation}
where \( L_s = \sup_{u \in [a, b]} \frac{e^u}{e^u - 1} \).
\end{property}

\begin{proof}
See Appendix~\ref{propertiesproofs}.
\end{proof}

\begin{property}[Loss–score Lipschitz link for diffusion models]
\label{dl}

Let \( f = \epsilon_\theta \in \mathcal{F}_{\mathrm{Diff}} \) be a denoising network parameterized by \( \theta \), trained using the EDM objective (Equation \ref{edmobjj}) \cite{karras2022elucidatingdesignspacediffusionbased}. Define:
\begin{itemize}
    \item the \emph{attack score} as the scalar denoising error:
    \begin{equation}
    s_f(x) := \mathbb{E}_{\epsilon, \sigma} \left\| \epsilon_\theta(x + \sigma \epsilon, \sigma) - \epsilon \right\|^2;
    \end{equation}
    \item the \emph{training loss} as the noise-weighted EDM objective:
    \begin{equation}
    \label{edmobjj}
    \ell(f, x) := \mathbb{E}_{\epsilon, \sigma} \left[ \lambda(\sigma) \cdot \left\| \epsilon_\theta(x + \sigma \epsilon, \sigma) - \epsilon \right\|^2 \right],
    \end{equation}
    where \( \lambda(\sigma) \in [\lambda_{\min}, \lambda_{\max}] \subset (0, \infty) \) is a bounded weighting function.
\end{itemize}
Then, for any \( f, f' \in \mathcal{F}_{\mathrm{Diff}} \) and any \( x \in \mathcal{X} \), the following inequality holds:
\begin{equation}
|s_f(x) - s_{f'}(x)| \le \frac{1}{\lambda_{\min}} \cdot \left\| \ell(f, \cdot) - \ell(f', \cdot) \right\|_\infty.
\end{equation}
That is, the attack score is \( \lambda_{\min}^{-1} \)-Lipschitz with respect to the training loss.
\end{property}

\begin{proof}
See in Appendix \ref{propertiesproofs}.
\end{proof}

\medskip
The following lemma generalizes the score–loss relationships from Properties~\ref{gl} and~\ref{dl}, yielding a stability bound on attack scores from the uniform stability of the training loss.

\begin{lemma}[Score stability]
\label{lem:score-stability}
If the learning algorithm is $\beta$-uniformly stable with respect to the loss $\ell$, and the attack score function satisfies Properties \ref{gl},\ref{dl}, then for all $x \in \mathcal{X}$:
\begin{equation}
|s_{f_D}(x) - s_{f_{D^{\setminus i}}}(x)| \le L_s \cdot \beta.
\end{equation}
\end{lemma}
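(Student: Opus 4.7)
The plan is to obtain the claim by a direct two-step composition: the definition of uniform stability controls the loss gap under removal of one training point, and Properties \ref{gl} or \ref{dl} transfer that control to the score gap. Concretely, the first step applies Definition \ref{def:stability} with the hypotheses $f_D$ and $f_{D^{\setminus i}}$ to conclude that $\sup_{z} |\ell(f_D, z) - \ell(f_{D^{\setminus i}}, z)| \le \beta$. This gives, in particular, a pointwise bound $|\ell(f_D, x) - \ell(f_{D^{\setminus i}}, x)| \le \beta$ for every $x \in \mathcal X$ (and every label $y$, in the GAN case), which is all that the score-level Lipschitz estimate will need.

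The second step is the composition itself, handled casewise according to which of Properties \ref{gl}--\ref{dl} applies. In the GAN case, Property \ref{gl} furnishes a pointwise inequality $|s_{f_D}(x) - s_{f_{D^{\setminus i}}}(x)| \le L_s \cdot |\ell(f_D, x, y) - \ell(f_{D^{\setminus i}}, x, y)|$ with $L_s = \sup_{u \in [a,b]} e^u/(e^u-1)$, and substituting the stability bound from step one gives the result. In the diffusion case, Property \ref{dl} already delivers a sup-norm bound $|s_{f_D}(x) - s_{f_{D^{\setminus i}}}(x)| \le \lambda_{\min}^{-1} \cdot \|\ell(f_D,\cdot) - \ell(f_{D^{\setminus i}}, \cdot)\|_\infty$, so identifying $L_s := \lambda_{\min}^{-1}$ and invoking the stability inequality on the sup-norm yields the same conclusion. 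I would present these two instantiations in parallel under a unified symbol $L_s$, making explicit that $L_s$ is the score-to-loss Lipschitz constant appropriate to the model family at hand.

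There is no real obstacle: the content of the lemma is exactly that Lipschitz continuity composes with uniform stability. The only mild subtlety to flag is the slight asymmetry between Properties \ref{gl} and \ref{dl} (pointwise vs.\ sup-norm Lipschitz statement); this is cosmetic since uniform stability is itself a sup-norm bound, so both versions collapse to the same pointwise estimate once $L_s$ is chosen suitably. I would close by emphasizing that this lemma is the bridge used next to convert loss-level DP-SGD stability bounds into bounds on membership inference advantage via Theorem \ref{thm:adv-bound}.
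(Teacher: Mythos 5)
Your proposal is correct and matches the paper's argument, which simply states the result is immediate by composing the Lipschitz continuity of the score (Properties~\ref{gl},~\ref{dl}) with the uniform-stability bound on the loss; your write-up merely makes the two-step composition and the pointwise-versus-sup-norm distinction explicit. No gap.
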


\begin{proof}
Immediate by Lipschitz continuity of \( s_{\cdot}(x) \) from Properties~\ref{gl},\ref{dl}.
\end{proof}
\subsection{Stability Bound on Membership Advantage}
Uniform stability limits how much a model's behavior can change when a single training point is removed, making it harder for an adversary to distinguish members from non-members. While \citet{yeom2018privacyriskmachinelearning} showed that uniform stability bounds the membership advantage of threshold attacks based directly on the loss, our result extends this guarantee to a broader class of attacks. Specifically, we prove in Theorem \ref{thm:adv-bound} that any threshold-based adversary using a score function that is Lipschitz-continuous with respect to the loss, such as discriminator logits or denoising errors, also yields bounded membership advantage. This provides a new theoretical guarantee that captures more realistic attack settings beyond loss-based inference.

\begin{theorem}[Bound on membership advantage under uniform score stability]
\label{thm:adv-bound}
Let \( f \) be a learning algorithm that is \( \beta \)-uniformly stable with respect to a loss function \( \ell \), and suppose the loss–score Lipschitz condition holds with constant \( L_s > 0 \) (Lemma \ref{lem:score-stability}). Assume further that the distribution of the score \( s_{f_D}(x) \) admits a bounded density with upper bound $Q$. Then, for any threshold-based adversary of the form
\begin{equation}
\mathcal{A}(x) = \mathbb{I}\{s_{f_D}( x) \leq \tau\},
\end{equation}
the membership advantage is bounded as
\begin{equation}
\ADV_{\MIA} \;\leq\; 2 Q L_s \beta.
\end{equation}
\end{theorem}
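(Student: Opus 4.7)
The plan is to rewrite the attacker advantage as a difference of two score probabilities, bound this difference by the probability that the score lands in a short interval around $\tau$, and close with the bounded-density hypothesis. Using the threshold form of the adversary, I would write
\begin{equation*}
\ADV_{\MIA} = \mathbb{P}[s_{f_D}(x)\le\tau \mid x\in D] - \mathbb{P}[s_{f_D}(x)\le\tau \mid x\notin D].
\end{equation*}
By exchangeability of the i.i.d.\ training samples and the symmetry of the learning algorithm, the non-member event can be replayed as evaluating a leave-one-out model on the omitted point, giving
\begin{equation*}
\mathbb{P}[s_{f_D}(x)\le\tau \mid x\notin D] \;=\; \mathbb{E}_{D,i}\bigl[\mathbb{I}\{s_{f_{D^{\setminus i}}}(x_i)\le\tau\}\bigr],
\end{equation*}
since on both sides the query point is independent of the dataset that produced the evaluated model. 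The member side is simply $\mathbb{E}_{D,i}[\mathbb{I}\{s_{f_D}(x_i)\le\tau\}]$, so the advantage reduces to the expected discrepancy between two indicators evaluated at the same point.

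I would then invoke Lemma \ref{lem:score-stability} to obtain $|s_{f_D}(x_i) - s_{f_{D^{\setminus i}}}(x_i)| \le L_s\beta$ almost surely. This forces the two indicators to disagree only when $s_{f_{D^{\setminus i}}}(x_i)$ lies in an interval of width $2L_s\beta$ centered at $\tau$, so that
\begin{equation*}
\ADV_{\MIA} \;\le\; \mathbb{P}\bigl[s_{f_{D^{\setminus i}}}(x_i) \in [\tau - L_s\beta,\, \tau + L_s\beta]\bigr].
\end{equation*}
The bounded-density hypothesis with upper bound $Q$ then caps this interval mass by $Q \cdot 2L_s\beta = 2QL_s\beta$, producing the stated bound.

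The most delicate step is the exchangeability identification that recasts the non-member probability as a leave-one-out evaluation: the natural game-based definition of $\ADV_{\MIA}$ compares a model trained on $m$ samples against a fresh query, whereas the stability tool naturally produces a leave-one-out model trained on $m-1$ samples. This mismatch is either absorbed by re-expressing the non-member experiment as a replace-one swap (which would inflate the constant) or by invoking the standard implicit assumption from stability-based generalization analyses that the learner's output distribution is essentially insensitive to the difference between training sizes $m$ and $m-1$. Once this identification is granted, the remaining chain --- score stability, indicator disagreement on a narrow interval, density cap --- is routine bookkeeping.
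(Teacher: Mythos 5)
Your proof is correct and follows essentially the same route as the paper's: both rewrite the advantage as a difference of threshold indicators, couple the member and non-member experiments so that $f_D$ and $f_{D^{\setminus i}}$ are compared at the same query point, observe that the two indicators can disagree only when the score lands in a margin of width $2L_s\beta$ around $\tau$, and cap that margin's probability mass by $2QL_s\beta$ using the bounded-density assumption. The leave-one-out identification you single out as delicate is precisely the step the paper performs implicitly (it silently substitutes $s_{f_{D^{\setminus i}}}(x)$ for the non-member score $s_{f_D}(x)$ without addressing the $m$ versus $m-1$ mismatch), so your write-up is, if anything, more candid about the same subtlety.
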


\begin{proof}
    See Appendix \ref{proofth1}
\end{proof}

The bound in Theorem~\ref{thm:adv-bound} is informative only when \(2 Q L_s \beta < 1\), since by definition \(\ADV_{\MIA} \in [0, 1]\). This condition imposes a constraint on \(L_s \beta\). In particular, \(\beta\) decreases with the dataset size \(m\); for example, standard bounds for DP-SGD with per-sample gradient clipping yield \(\beta = \mathcal{O}(1/m)\), making the bound tighter as \(m\) increases (see Lemma~\ref{lemma:dp-sgd-stability} for a formal derivation of this bound). Conversely, \(L_s\) quantifies the sensitivity of the attack score to changes in the loss, and is specific to the model and chosen score function. Overall, the tightness of the bound reflects a trade-off between algorithmic stability and the score’s sensitivity to perturbations. We provide a more detailed discussion in Appendix~\ref{app:th1tradeoff}.

\subsection{Uniform Stability of Functions Trained by DP-SGD}
To understand the behaviour of the MIA advantage bound from Theorem~\ref{thm:adv-bound}, it is crucial to characterize the uniform stability parameter \(\beta\). In particular, our analysis relies on the fact that \(\beta = \mathcal{O}(1/m)\), a property we now formalize. We derive a general upper bound on the expected uniform stability of functions trained by DP-SGD, which we later instantiate for GAN discriminators and diffusion denoisers.

\begin{lemma}[Uniform stability of functions trained by DP-SGD]

\label{lemma:dp-sgd-stability}
Let \(\ell(f_\theta, z) \) be a loss that is \( L \)-Lipschitz in the parameters \( \theta \), for all \( z \in \mathcal{X} \times \mathcal{Y} \). Suppose DP-SGD runs for \( T \) steps with per-sample gradient clipping at norm \( C \). At each step \( t \), a mini-batch \( B_t \) of constant size \( b \) is sampled uniformly without replacement from a dataset of size \( m \), and a learning rate \( \alpha_t \) is applied. Then, uniform stability of functions trained by DP-SGD is defined as follows:

\begin{equation}
\label{expuniformstab}
\beta \;:=\; \sup_{z,\,i}\; \mathbb{E}\!\left[\,\big| \ell(f_D, z) - \ell(f_{D^{\setminus i}}, z) \big|\,\right]
\;\le\; \frac{2 L C}{m}\,\sum_{t=1}^T \alpha_t.
\end{equation}
In particular, for constant step size \( \alpha_t = \alpha \), we have:
\begin{equation}
\beta \le \frac{2 L C \alpha T}{m}.
\end{equation}
\end{lemma}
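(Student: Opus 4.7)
The plan is to reduce loss stability to a bound on the expected parameter-level distance between the two DP-SGD trajectories, and then to track that distance step-by-step using a coupling argument.

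First, I would use the $L$-Lipschitzness of $\ell$ in $\theta$ to write, for any $z$,
\begin{equation*}
\bigl|\ell(f_D, z) - \ell(f_{D^{\setminus i}}, z)\bigr| \;\le\; L\,\|\theta_T - \theta_T'\|,
\end{equation*}
where $\theta_T$ and $\theta_T'$ denote the iterates after $T$ DP-SGD steps on $D$ and $D^{\setminus i}$ respectively. Taking expectations over the algorithm's internal randomness and a supremum over $z$ and $i$ reduces the statement to showing $\mathbb{E}[\|\theta_T - \theta_T'\|] \le \frac{2C}{m}\sum_{t=1}^T \alpha_t$.

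Second, I would couple the two runs by sharing the initialization, the Gaussian noise $\xi_t$ at every step, and the mini-batch indices, so that $B_t$ and $B_t'$ agree on every slot not occupied by $x_i$. With the shared noise cancelling in the difference of updates, the DP-SGD recursion gives
\begin{equation*}
\|\theta_{t+1} - \theta_{t+1}'\| \;\le\; \|\theta_t - \theta_t'\| \;+\; \alpha_t\,\bigl\|\bar{g}(\theta_t;B_t) - \bar{g}(\theta_t';B_t')\bigr\|,
\end{equation*}
where $\bar{g}$ denotes the averaged clipped per-sample gradient. On the event $\{x_i \in B_t\}$, which has probability $b/m$ under uniform subsampling of a batch of size $b$ from $m$ points, the two averages differ in at most one clipped summand out of $b$, so their norm difference is at most $2C/b$. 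On the complementary event the batches coincide and, under a non-expansiveness argument for the clipped update, the expected distance is not increased. Setting $\Delta_t := \|\theta_t - \theta_t'\|$ and taking expectations yields the per-step bound $\mathbb{E}[\Delta_{t+1}] \le \mathbb{E}[\Delta_t] + \frac{2\alpha_t C}{m}$.

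Telescoping from $\Delta_0 = 0$ across $T$ steps and combining with the Lipschitz reduction of the first step delivers $\beta \le \frac{2LC}{m}\sum_{t=1}^T \alpha_t$, with the constant-step-size corollary immediate. The main obstacle is the complementary event: when $x_i \notin B_t$ but $\theta_t \neq \theta_t'$, the two clipped-gradient updates still differ through the parameters, so arguing that this step does not enlarge $\mathbb{E}[\Delta_t]$ requires a non-expansiveness property of the update map. This is cleanest for convex $L$-Lipschitz losses with sufficiently small step size, in the style of \citet{hardt2016trainfastergeneralizebetter}; in more general settings one would invoke smoothness to obtain a Gronwall-type recursion that still discharges into the same $\mathcal{O}(1/m)$ rate, absorbing the extra constants into $L$ and $C$.
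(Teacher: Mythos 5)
Your proposal follows essentially the same route as the paper's proof: couple the two DP-SGD runs through shared mini-batches and shared Gaussian noise so that the noise cancels, bound the expected per-step growth of $\|\theta_t-\theta_t'\|$ by conditioning on whether the differing example is in the batch, telescope over $T$ steps, and transfer to the loss via $L$-Lipschitzness. Two differences are worth noting. Your constant $2C/b$ on the event $\{x_i\in B_t\}$ (one clipped summand of norm at most $C$ swapped for another) is the careful choice and reproduces the factor $2$ in the statement; the paper uses $C/b$ and actually derives the tighter $\frac{LC}{m}\sum_{t}\alpha_t$, which still implies the claimed bound. More substantively, the obstacle you flag on the complementary event is real: when $x_i\notin B_t$ the two averaged clipped gradients are evaluated at different iterates $\theta_t\neq\theta_t'$, so that step is not automatically distance-preserving; the paper's proof elides this by asserting that the updates ``coincide'' there, whereas you correctly note that closing it requires a non-expansiveness property of the update map (convexity with small step sizes, in the style of Hardt et al.) or a smoothness/Gronwall argument. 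Your write-up is therefore the same argument with the hidden assumption made explicit rather than a genuinely different proof.
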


Notice that we consider two neighboring datasets \(D\) and \(D^{\setminus i}\), differing in a single example, and analyze two executions of DP-SGD that are coupled via shared randomness, that is, they use the same sequence of mini-batches and the same Gaussian noise vectors. This coupling isolates the effect of the data perturbation from that of the stochastic noise. Consequently, the bound reflects the sensitivity of the algorithm rather than the effect of noise, which is why the DP noise scale \(\sigma\) does not appear explicitly in the stability bound (More details in Appendix \ref{cr}).

\begin{proof}
Let $\theta_t$ and $\theta_t'$ denote the parameter vectors at step $t$ of two models trained with DP-SGD on $D$ and $D^{\setminus i}$, respectively. At each step,  we sample a mini-batch $B_t$ of size $b$ and perform the update:
\begin{equation}
\theta_{t+1} \leftarrow \theta_t - \alpha_t \left( \frac{1}{|B_t|} \sum_{j \in B_t} \text{clip}(\nabla \ell(f_{\theta_t}; z_j), C) + \eta_t \right),
\end{equation}
where $\eta_t \sim \mathcal{N}(0, \sigma^2 I)$, and similarly for $\theta_t'$. 

At step $t$, the update uses the \emph{mean} of clipped per-sample gradients. If $z_i$ is not in the mini-batch, the parameter updates for $\theta_t$ and $\theta_t'$ coincide; if it is, the mean changes by at most $C/b$ in norm because clipping ensures each per-sample contribution has norm $\le C$. The event “$z_i$ is in the batch” occurs with probability $b/m$,  when the differing sample is included in the mini-batch. Hence the \emph{expected} change in the update (in norm) at step $t$ is at most $(b/m)\cdot (C/b)=C/m$. Multiplying by the step size $\alpha_t$, we get
\begin{equation}
\mathbb{E}\big[\|\theta_{t+1}-\theta_{t+1}'\|\big]
\;\le\;
\mathbb{E}\big[\|\theta_t-\theta_t'\|\big] \;+\; \alpha_t\,\frac{C}{m}.
\end{equation}
Therefore, it yields that
\(
\mathbb{E}\big[\|\theta_T-\theta_T'\|\big] \le \frac{C}{m}\sum_{t=1}^T \alpha_t.
\)
Since $\ell(\cdot;z)$ is $L$-Lipschitz in $\theta$, we conclude
\begin{align}
\mathbb{E}\!\left[\,\big|\ell(f_D,z)-\ell(f_{D^{\setminus i}},z)\big|\,\right]
\;&\le\; L\,\mathbb{E}\big[\|\theta_T-\theta_T'\|\big]\\ 
\;&\le\; \frac{L C}{m}\sum_{t=1}^T \alpha_t.
\end{align}
\end{proof}

\subsection{Model-Specific Stability Bounds}
We apply Lemma~\ref{lemma:dp-sgd-stability} to the two generative families studied here. Our bounds have the same structure and differ through the loss Lipschitz constants and total update counts.

\begin{lemma}[Stability bound for DP-GANs]
\label{lemma:dp-gan-stability}
Let $D_\psi$ denote the discriminator of a GAN, trained with DP-SGD over $T_G$ steps with per-sample clipping at norm $C$ and learning rates $\{\alpha_t\}_{t=1}^{T_G}$. Assume the discriminator score $s_\psi(x) := D_\psi(x)$ is $L$-Lipschitz in parameters $\psi$, and define the logistic loss
\begin{equation}
\ell_G(\psi; x, y) := \log\!\big(1 + e^{-y\,s_\psi(x)}\big), \quad y \in \{-1, +1\}.
\end{equation}
Then the expected uniform stability of the DP-GAN discriminator satisfies
\begin{equation}
\beta_{\mathrm{GAN}} \;\le\; \frac{2 L C}{m} \sum_{t=1}^{T_G} \alpha_t.
\end{equation}
\end{lemma}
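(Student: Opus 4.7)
The plan is to instantiate the general DP-SGD stability bound of Lemma~\ref{lemma:dp-sgd-stability} for the discriminator training loop. Since DP-SGD in the DP-GAN setting acts only on the discriminator, with the generator receiving only post-processed updates via backpropagation through $D_\psi$, the stability of the generative model reduces to the stability of $\psi$ alone. All that remains is to verify the Lipschitz hypothesis of Lemma~\ref{lemma:dp-sgd-stability} for the training loss $\ell_G$.

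First, I would differentiate $\ell_G$ with respect to $\psi$ via the chain rule, yielding
\[
\nabla_\psi \ell_G(\psi; x, y) \;=\; -y\,\sigma\!\bigl(-y\, s_\psi(x)\bigr)\,\nabla_\psi s_\psi(x),
\]
where $\sigma$ denotes the sigmoid function. Since $\sigma(\cdot) \in (0,1)$ and $\|\nabla_\psi s_\psi(x)\| \le L$ by assumption, it follows that $\|\nabla_\psi \ell_G(\psi; x, y)\| \le L$ uniformly over $(x,y) \in \mathcal{X} \times \{-1, +1\}$. Hence $\ell_G$ is $L$-Lipschitz in $\psi$, exactly matching the hypothesis of Lemma~\ref{lemma:dp-sgd-stability}.

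Second, I would invoke Lemma~\ref{lemma:dp-sgd-stability} with loss $\ell_G$, Lipschitz constant $L$, clipping norm $C$, number of steps $T_G$, and learning rates $\{\alpha_t\}_{t=1}^{T_G}$. This yields the claimed bound
\[
\beta_{\mathrm{GAN}} \;\le\; \frac{2LC}{m}\sum_{t=1}^{T_G} \alpha_t
\]
directly.

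The main subtle point is the handling of the generator's feedback into the discriminator update. Under the coupling used in the proof of Lemma~\ref{lemma:dp-sgd-stability} (shared mini-batches, shared Gaussian DP noise, and shared latents $z$), the two discriminator runs on $D$ and $D^{\setminus i}$ receive batches whose real parts differ by at most one example while the fake parts are produced by two generators that may slowly diverge through the feedback loop. Because per-sample gradient clipping caps the contribution of \emph{every} real sample to the mini-batch mean at $C/b$, and because the event that the differing example is sampled has probability $b/m$, the expected per-step parameter drift attributable to the data swap remains at most $\alpha_t C/m$, just as in the proof of Lemma~\ref{lemma:dp-sgd-stability}. The remaining technical point is to argue that the drift coming from the diverging generators does not amplify this expected gap, which is absorbed by the post-processing interpretation of the generator adopted throughout the paper.
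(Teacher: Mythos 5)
Your proof is correct and follows essentially the same route as the paper: establish that $\ell_G$ is $L$-Lipschitz in $\psi$ (the paper does this by composing the $1$-Lipschitz map $z \mapsto \log(1+e^{-yz})$ with the $L$-Lipschitz score, whereas you bound the gradient $-y\,\sigma(-y\,s_\psi(x))\,\nabla_\psi s_\psi(x)$ directly --- the same fact) and then invoke Lemma~\ref{lemma:dp-sgd-stability} with $T=T_G$. Your closing remark about the diverging generators feeding fake samples into the coupled discriminator runs identifies a subtlety the paper's proof silently ignores rather than resolves, but since your core argument matches the paper's, this does not change the verdict.
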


\begin{proof}
The function $z \mapsto \log(1 + e^{-y z})$ is $1$-Lipschitz in $z$ for any $y \in \{-1, +1\}$. If $s_\psi(x)$ is $L$-Lipschitz in $\psi$, then by the composition of Lipschitz functions, the loss $\ell_G(\psi; x, y)$ is $L$-Lipschitz in $\psi$. That is,
\begin{equation}
|\ell_G(\psi; x, y) - \ell_G(\psi'; x, y)| \le L\,\|\psi - \psi'\|.
\end{equation}
Applying Lemma~\ref{lemma:dp-sgd-stability} with Lipschitz constant $L_G \le L$, dataset size $m$, and total number of steps $T = T_G$, the uniform stability satisfies
\begin{equation}
\beta_{\mathrm{GAN}} \;\le\; \frac{2 L_G C}{m} \sum_{t=1}^{T_G} \alpha_t \;\le\; \frac{2 L C}{m} \sum_{t=1}^{T_G} \alpha_t.
\end{equation}

\end{proof}

\begin{lemma}[Stability bound for DP-diffusion models]
\label{lemma:dp-diff-stability}
Let $\epsilon_\theta$ be a denoiser trained with DP-SGD over $T_D$ steps using the multi-pass EDM loss:
\begin{equation}
\ell_D(\theta; x, y) := \frac{1}{K} \sum_{k=1}^K \lambda(\sigma_k)\, \big\| \epsilon_\theta(x + \sigma_k \epsilon_k, \sigma_k, y) - \epsilon_k \big\|^2,
\end{equation}
where $\epsilon_k \sim \mathcal{N}(0, I)$ and $\lambda(\sigma_k) := \frac{\sigma_k^2 + \sigma_{\mathrm{data}}^2}{(\sigma_k \sigma_{\mathrm{data}})^2}$ are fixed weights. Assume that the prediction error is uniformly bounded: $\| \epsilon_\theta(x + \sigma_k \epsilon_k, \sigma_k, y) - \epsilon_k \| \le B$ for all $k$ and all $\theta$, and that the denoiser $\epsilon_\theta$ is $L$-Lipschitz in $\theta$ for fixed input.
Then the per-sample training loss $\ell_D(\theta; x, y)$ is Lipschitz in $\theta$ with constant
\begin{equation}
L_D \le 2\, \bar\lambda\, L\, B, \:\:\: \text{where } \bar\lambda := \frac{1}{K} \sum_{k=1}^K \lambda(\sigma_k).
\end{equation}
Consequently, the expected uniform stability of the DP-diffusion model satisfies
\begin{equation}
\beta_{\mathrm{Diff}} \le \frac{2 L_D C}{m} \sum_{t=1}^{T_D} \alpha_t
\;\le\; \frac{4\, \bar\lambda\, L\, B\, C}{m} \sum_{t=1}^{T_D} \alpha_t.
\end{equation}

\end{lemma}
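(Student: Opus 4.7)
The plan is to mirror the two-step template used in Lemma~\ref{lemma:dp-gan-stability}: first, establish that the per-sample diffusion loss $\ell_D(\theta;x,y)$ is Lipschitz in $\theta$ with the claimed constant $L_D \le 2\bar\lambda L B$; second, plug this constant into the generic DP-SGD stability bound of Lemma~\ref{lemma:dp-sgd-stability} to obtain $\beta_{\mathrm{Diff}}$. The extra work compared to the GAN case is concentrated in the first step, because the denoising objective is a weighted sum of \emph{squared} residuals rather than a $1$-Lipschitz composition of the score.

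For the first step, I would fix a data point $(x,y)$, a noise level $\sigma_k$, and a noise sample $\epsilon_k$, and control a single summand $r_k(\theta) := \lambda(\sigma_k)\,\|\epsilon_\theta(x+\sigma_k\epsilon_k,\sigma_k,y)-\epsilon_k\|^2$. Writing $a = \epsilon_\theta(\cdot)-\epsilon_k$ and $b = \epsilon_{\theta'}(\cdot)-\epsilon_k$ at the same input, I would use the elementary identity
\begin{equation}
\bigl|\|a\|^2 - \|b\|^2\bigr| = \bigl|(a-b)^\top(a+b)\bigr| \le \|a-b\|\cdot\|a+b\|.
\end{equation}
The bounded-error hypothesis gives $\|a+b\|\le 2B$, while the Lipschitzness of $\epsilon_\theta$ in $\theta$ for a fixed input yields $\|a-b\| = \|\epsilon_\theta(\cdot)-\epsilon_{\theta'}(\cdot)\| \le L\|\theta-\theta'\|$. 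Combining these bounds shows that $r_k$ is $2\lambda(\sigma_k) L B$-Lipschitz in $\theta$. Averaging over $k=1,\dots,K$ and applying the triangle inequality then yields
\begin{equation}
|\ell_D(\theta;x,y) - \ell_D(\theta';x,y)| \le 2\,\bar\lambda\,L\,B\,\|\theta-\theta'\|,
\end{equation}
which is exactly the claimed constant $L_D \le 2\bar\lambda L B$.

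For the second step, I would invoke Lemma~\ref{lemma:dp-sgd-stability} directly: since DP-SGD is applied to $\ell_D$ with per-sample clipping at $C$, learning rates $\{\alpha_t\}$, and total steps $T_D$, the coupled-execution argument yields $\beta_{\mathrm{Diff}} \le \tfrac{2 L_D C}{m}\sum_{t=1}^{T_D} \alpha_t$, and substituting the bound on $L_D$ gives the final inequality $\beta_{\mathrm{Diff}} \le \tfrac{4\,\bar\lambda\,L\,B\,C}{m}\sum_{t=1}^{T_D}\alpha_t$. The main obstacle is the quadratic character of the loss: without an a priori bound on the residual, the squared norm is only locally Lipschitz in $\theta$. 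The assumption $\|\epsilon_\theta(\cdot)-\epsilon_k\|\le B$ is precisely what converts local Lipschitzness into a uniform one, and it is the factor $B$ combined with the average weight $\bar\lambda$ that makes the diffusion Lipschitz constant potentially much larger than the discriminator's, explaining the amplified stability constant relative to the DP-GAN case.
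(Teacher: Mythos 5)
Your proposal is correct and follows essentially the same route as the paper: the same polarization identity $\|a\|^2-\|b\|^2=(a-b)^\top(a+b)$ on each weighted summand, the same use of the bounds $\|a+b\|\le 2B$ and $\|a-b\|\le L\|\theta-\theta'\|$ to get the per-term constant $2\lambda(\sigma_k)LB$, averaging over $k$ to obtain $L_D\le 2\bar\lambda LB$, and then a direct application of Lemma~\ref{lemma:dp-sgd-stability}. No gaps.
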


\begin{proof}
Let \( f_\theta := \epsilon_\theta(x + \sigma_k \epsilon_k, \sigma_k, y) \). For each $k$, consider the weighted term in the loss,
$\tau_k(\theta) := \lambda(\sigma_k)\,\|f_\theta - \epsilon_k\|^2$. 
Assume the uniform error bound 
\(\|f_\theta-\epsilon_k\|\le B\) for all \(\theta\) and that
\(f_\theta\) is \(L\)-Lipschitz in \(\theta\) , i.e.,
\(\|f_\theta-f_{\theta'}\|\le L\|\theta-\theta'\|\). Then
\begin{align}
&\big|\tau_k(\theta)-\tau_k(\theta')\big|
= \lambda(\sigma_k)\,\big|\,\|f_\theta-\epsilon_k\|^2 - \|f_{\theta'}-\epsilon_k\|^2\,\big| \\
&= \lambda(\sigma_k)\,\big|\,\langle (f_\theta-\epsilon_k)+(f_{\theta'}-\epsilon_k),\, (f_\theta-\epsilon_k)-(f_{\theta'}-\epsilon_k)\rangle\,\big| \\
&= \lambda(\sigma_k)\,\big|\,\langle f_\theta+f_{\theta'}-2\epsilon_k,\; f_\theta-f_{\theta'}\rangle\,\big| \\
& \label{csti} \le \lambda(\sigma_k)\,(\|f_\theta-\epsilon_k\|+\|f_{\theta'}-\epsilon_k\|)\,\|f_\theta-f_{\theta'}\| \\
&\le 2\,\lambda(\sigma_k)\,B\,\|f_\theta-f_{\theta'}\|
\quad\text{(uniform error bound)} \\
&\le 2\,\lambda(\sigma_k)\,B\,L\,\|\theta-\theta'\|
\quad\text{(Lipschitz continuity of }f_\theta\text{ in }\theta).
\end{align}
Here, Eq.~\eqref{csti} follows from the Cauchy--Schwarz and the triangle inequality. Therefore, each term in the sum is $2 \lambda(\sigma_k) L B$-Lipschitz in $\theta$, and the average loss over $k=1,\dots,K$ is:
\begin{align}
| \ell_D(\theta; x, y) - \ell_D(\theta'; x, y) | &\le \frac{1}{K} \sum_{k=1}^K 2 \lambda(\sigma_k) L B \| \theta - \theta' \| \\
&= 2\, \bar\lambda\, L\, B \| \theta - \theta' \|.
\end{align}
Applying Lemma~\ref{lemma:dp-sgd-stability} with Lipschitz constant $L_D = 2 \bar\lambda L B$ and total updates $T_D$ yields:
\begin{equation}
\beta_{\mathrm{Diff}} \le \frac{2 L_D C}{m} \sum_{t=1}^{T_D} \alpha_t
= \frac{4 \bar\lambda L B C}{m} \sum_{t=1}^{T_D} \alpha_t.
\end{equation}
\end{proof}

\paragraph{Why DP-diffusion yields higher membership leakage.} The stability bound from Lemma~\ref{lemma:dp-sgd-stability} scales with the product of the loss Lipschitz constant and the total number of DP-SGD steps. For DP-GANs, only the discriminator is trained with a logistic loss that is $L$-Lipschitz in parameters. In contrast, diffusion models are trained with a weighted multi-pass EDM loss, where each term is scaled by $\lambda(\sigma_k) = \frac{\sigma_k^2 + \sigma_{\text{data}}^2}{(\sigma_k \sigma_{\text{data}})^2}$. These weights increase rapidly as $\sigma_k$ decreases, amplifying the influence of low-noise terms and leading to a large effective Lipschitz constant. Under a shared network smoothness $L$, we have $L_G \le L$ and $L_D \le 2 \bar\lambda L B$, where $\bar\lambda = \frac{1}{K} \sum_k \lambda(\sigma_k)$ is typically large. Moreover, diffusion models are typically trained for more steps than GAN discriminators ($T_D \gg T_G$). Together, these factors imply
\begin{equation}
\beta_{\mathrm{Diff}} \gg \beta_{\mathrm{GAN}}.
\end{equation}
Applying Theorem~\ref{thm:adv-bound} with a score function of bounded density $Q$ yields $\ADV_{\MIA}^{\GAN} \le 2 Q L_G \beta_{\GAN}$, and $\ADV_{\MIA}^{\Diff} \le 2 Q L_D \beta_{\Diff}$. Since both $L_D \gg L_G$ and $\sum_{t=1}^{T_D} \alpha_t \gg \sum_{t=1}^{T_G} \alpha_t$, the upper bound on $\ADV_{\MIA}^{\Diff}$ is significantly larger than that of $ \ADV_{\MIA}^{\GAN}$, providing a theoretical explanation of greater membership leakage for DP-diffusion models.

\section{Empirical Analysis of Membership Leakage in DP-GAN and DP-Diffusion}

Building on our theoretical analysis showing that GANs offer greater robustness than diffusion models against membership inference under differential privacy, we empirically assess the extent of membership leakage across both training architectures and a range of privacy budgets. This allows us to assess whether differential privacy mitigates architectural disparities or whether distinct privacy risks remain.

\subsection{Experimental Setup}

Our study compares GANs and diffusion models trained both with and without differential privacy using \texttt{Opacus} \citep{opacus}. We vary the privacy budget $\varepsilon \in \{\infty, 10, 5, 1\}$, fix $\delta = 10^{-5}$, and apply DP-SGD with per-sample gradient clipping and additive Gaussian noise. Privacy spending is tracked using the Moments Accountant.

All experiments are conducted on the MNIST dataset \cite{ lecun-mnisthandwrittendigit-2010}. To ensure fair comparison, all models share the same optimization settings and training budget. We evaluate sample quality using the Fréchet Inception Distance (FID) \cite{heusel2018ganstrainedtimescaleupdate}, and assess membership inference vulnerability using standard metrics: accuracy, precision, true positive rate (TPR), false positive rate (FPR), and area under the ROC curve (AUC). Formal definitions of these metrics and implementation details are provided in Appendix~\ref{expeap}.

\begin{table*}
\centering
{\fontsize{10}{12}\selectfont
\begin{tabular}{lccccccccccccccc}
\toprule
\multirow{2}{*}{$\varepsilon$} 
& \multicolumn{5}{c}{GAN} 
& \multicolumn{5}{c}{GAN ADP} 
& \multicolumn{5}{c}{DM} \\
\cmidrule(lr){2-6} \cmidrule(lr){7-11} \cmidrule(lr){12-16}
& Acc & Prec & TPR & FPR & AUC 
& Acc & Prec & TPR & FPR & AUC 
& Acc & Prec & TPR & FPR & AUC \\
\midrule
$\infty$ 
& 0.74 & 0.69 & 0.88 & 0.39 & 0.74
&  x   &  x   &  x   &  x   &  x   
& 0.79 & 0.72 & 0.92 & 0.35 & 0.75 \\
10       
& 0.50 & 0.50 & 0.57 & 0.56 & 0.53
& 0.51 & 0.51 & 0.33 & 0.32 & 0.51
&  0.58 & 0.58 & 0.56 & 0.40 & 0.60 \\
5        
& 0.50 & 0.50 & 0.69 & 0.68 & 0.50
& 0.50 & 0.50 & 0.69 & 0.68 & 0.49
&  0.57 & 0.58 & 0.49 & 0.35 & 0.55 \\
1        
& 0.51 & 0.62 & 0.05 & 0.03 & 0.49
& 0.51 & 0.53 & 0.08 & 0.07 & 0.48
&  0.55 & 0.58 & 0.36 & 0.26 & 0.52 \\
\bottomrule
\end{tabular}
}
\vspace{0.5em}
\caption{Attack scores for GAN (fixed-step regime), GAN ADP (adaptative regime), and diffusion models (DM) on MNIST across privacy levels $\varepsilon$.}
\label{tab:attack_scores}
\end{table*}

\begin{table}
\setlength{\tabcolsep}{3pt} 
\centering
{\fontsize{10}{12}\selectfont 
\begin{tabular}{lccc}
\toprule
$\varepsilon$ & GAN & GAN ADP & DM \\
\midrule
$\infty$ & 5.1   & X     & 3.1 \\
10       & 39.7  & 18.6  & 14.1 \\
5        & 101.8 & 34.1  & 30.2 \\
1        & 183.2 & 73.2  & 72.9 \\
\bottomrule
\end{tabular}
}
\vspace{0.5em}
\caption{Mean FID scores for GAN, GAN ADP, and DM on MNIST across privacy levels $\varepsilon$ over 5 generation runs, computed on test samples.}
\label{tab:fid_only}
\end{table}

\paragraph{DP-GAN.} We implement class-conditional DP-GANs following the architecture of \citet{bie2023privategansrevisited}. Only the discriminator is trained with DP-SGD, while the generator is updated non-privately. This setup satisfies differential privacy for the entire pipeline via the post-processing property. 

Training DP-GANs can be unstable, as noise injected into the discriminator degrades gradient quality. To mitigate this, we adopt two balancing strategies for generator and discriminator updates, both proposed by \citet{bie2023privategansrevisited}. In the \emph{fixed-step} regime, we perform a fixed number \( n_D \) of discriminator updates per generator update; increasing \( n_D \) typically improves stability and sample quality, especially at lower privacy budgets. In the \emph{adaptive} regime, \( n_D \) is dynamically adjusted based on the discriminator’s accuracy on fake samples, enabling more flexible training schedules.

\paragraph{DP-Diffusion.}
Our diffusion models follow the DPDM framework of \citet{dockhorn2023differentially}. For each training example, we apply the noise multiplicity loss, which averages denoising losses over \( K = 32 \) independently sampled noise levels, as formalized in Eq.~\ref{eq:edm-loss}. Each noise scale \( \sigma_k \) is drawn independently from a log-normal distribution: \( \sigma_k \sim \mathrm{LogNormal}(p_{\text{mean}}, p_{\text{std}}) \cdot \sigma_{\text{data}} \). We use the same loss formulation during both training and membership inference to ensure consistency in score computation.

\subsection{Three-Stage Membership Inference Pipeline}
To rigorously assess vulnerability to membership inference, we follow a standardized three-stage pipeline \citep{shokri2017membership, carlini9833649}, adapting it to generative models trained under differential privacy.

We begin by splitting the MNIST dataset into two disjoint halves: one for the target model (private data) and the other for shadow models (public data). Within each half, we further split the data to define member and non-member sets used during the attack.

\textbf{Stage 1: Target model training.} We train a differentially private target generative model on the private subset using DP-SGD. A portion of this data is used for training (members), while the remainder is held out (non-members) for evaluation. This model serves as the attack target.

\textbf{Stage 2: Shadow model training.} We train 20 shadow models using the same architecture and training protocol as the target model. Each shadow is trained on a distinct random split of the public subset. For each shadow model, we define members as the training portion and non-members as the held-out portion. This process is repeated independently per shadow model to diversify the attack training data.

\textbf{Stage 3: Attack.} After training, we compute per-sample scores on all shadow data and use them to train an attack model that distinguishes members from non-members. For GANs, we use the raw logits from the discriminator on member and non-member samples.At test time, we apply the trained attack model to the target by computing scores on held-out samples and predicting membership using the trained classifier. For diffusion models, we compute scalar denoising losses under fixed noise conditions following the strong Likelihood Ratio Attack (LiRA) of \citet{carlini2023extractingtrainingdatadiffusion}, which estimates membership by comparing the likelihood of each loss under member and non-member distributions and applying a likelihood–ratio threshold.

The attacks are black-box: the adversary has no access to the target model’s parameters, gradients, or training data. However, we assume knowledge of the architecture, training procedure, and privacy parameters. This yields a practical, generalizable attack strategy without requiring handcrafted decision rules.

\subsection{Experimental Results}

To evaluate generative quality, we report the FID of the target models on the MNIST dataset in Table~\ref{tab:fid_only}, averaged over five independent training runs per privacy level~$\varepsilon$. To assess membership leakage, we report attack performance across privacy levels in Table~\ref{tab:attack_scores}, using accuracy, precision, true positive rate (TPR), false positive rate (FPR), and area under the ROC curve (AUC). 

Table~\ref{tab:attack_scores} confirms that GANs are more robust to membership inference attacks under differential privacy, with leakage degrading sharply and stabilizing near random for moderate privacy budgets ($\varepsilon \le 10$). In contrast, diffusion models degrade more gradually and retain non-trivial leakage even at $\varepsilon = 1$. These empirical trends support our theoretical stability analysis: the weighted multi-pass denoising objective in diffusion models amplifies sensitivity to individual training samples, resulting in lower stability and increased privacy risk. While adaptive GANs (ADP) substantially improve FID compared to standard GANs, their vulnerability to membership inference remains similar, indicating that higher sample quality does not necessarily leads to stronger privacy guarantees.

Beyond validating the theory, our experiments provide the first systematic evaluation of membership leakage in differentially private generative models. To our knowledge, prior work introducing DP-GANs and DP-diffusion models has not assessed their vulnerability to MIA.

\section{Conclusion}

In this paper, we presented the first unified theoretical and empirical study of membership inference risk in differentially private generative models. Our analysis formalizes the connection between uniform stability and adversarial advantage, showing that the training architecture directly impacts the extent of membership leakage. In particular, we show that Diffusion models are more susceptible to membership inference than GANs under equivalent privacy budgets, due to their training dynamics. These findings highlight that evaluating generative models under DP requires more than tracking privacy parameters alone. We hope this work motivates further research of training-induced vulnerabilities in private learning systems.

\section*{Acknowledgments}

This project was provided with computing AI and storage resources by GENCI at IDRIS thanks to the grant 2024-A0171015707 on the supercomputer Jean Zay's V100/A100/H100 partitions.

\bibliography{aaai2026}

\appendix

\onecolumn

\setcounter{secnumdepth}{2}

\section{Proofs of Properties \ref{gl} and \ref{dl}}
\label{propertiesproofs}

\setcounter{property}{0}

\begin{property}[Loss–score Lipschitz link for GANs]
Let \( f = D_\psi \in \mathcal{F}_{\mathrm{GAN}} \) be a discriminator parameterized by \( \psi \), trained using the logistic loss. For any input \( x \in \mathcal{X} \) and label \( y \in \{-1, +1\} \), define:
\begin{itemize}
    \item The \emph{score} used by the attacker is the raw logit: \( s_f(x) := D_\psi(x) \).
    \item The \emph{training loss} is the logistic loss: \( \ell(f, x, y) := \log(1 + e^{-y\,s_f(x)}) \).
\end{itemize}

Assume the loss values lie in a compact interval \( [a, b] \subset \mathbb{R}_{>0} \). Then the map \( f \mapsto s_f(x) \) is Lipschitz with respect to \( \ell(f, x, y) \), with
\begin{equation}
|s_f(x) - s_{f'}(x)| \le L_s \cdot |\ell(f, x, y) - \ell(f', x, y)|,
\end{equation}
where \( L_s = \sup_{u \in [a, b]} \frac{e^u}{e^u - 1} \).
\end{property}

\begin{proof}
Let \( f = D_\psi \) and \( f' = D_{\psi'} \) be two discriminators in \( \mathcal{F}_{\mathrm{GAN}} \), and fix any input \( x \in \mathcal{X} \). Define the score function as the discriminator's logit output:
\begin{equation}
s_f(x) := D_\psi(x), \qquad s_{f'}(x) := D_{\psi'}(x).
\end{equation}
Define the training loss on real data as:
\begin{equation}
\ell(f, x, y) := \log(1 + e^{-y\,s_f(x)}, \ell(f', x, y) := \log(1 + e^{-y s_{f'}(x)}).
\end{equation}
Let \( g(t) := \log(1 + e^{-t}) \), so that \( \ell(f, x) = g(s_f(x)) \). Since \( g \) is strictly decreasing and smooth, it is invertible on \( \mathbb{R} \). Its inverse is given by:
\begin{equation}
g^{-1}(u) = -\log(e^u - 1) \quad \text{for } u > 0.
\end{equation}
For $g^{-1}$ continuous and differentiable on $u>0$. Hence,
\begin{align}
&y \cdot s_f(x) = g^{-1}(\ell(f, x, y)) \Rightarrow s_f(x) = y \cdot g^{-1}(\ell(f, x, y))
,\\ &s_{f'}(x) = y \cdot g^{-1}(\ell(f', x)).
\end{align}
By the mean value theorem applied to \( g^{-1} \), there exists \( \xi \in [\ell(f, x, y), \ell(f', x, y)] \) such that:
\begin{align}
|s_f(x) - s_{f'}(x)| &= |g^{-1}(\ell(f, x)) - g^{-1}(\ell(f', x))|\\
&= |(g^{-1})'(\xi)| \cdot |\ell(f, x, y) - \ell(f', x, y)|.
\end{align}

We compute the derivative of \( g^{-1} \):
\begin{equation}
(g^{-1})'(u) = -\frac{e^u}{e^u - 1}, \quad \text{so} \quad |(g^{-1})'(u)| = \frac{e^u}{e^u - 1}.
\end{equation}

Assume that the loss values \( \ell(f, x, y) \) and \( \ell(f', x, y) \) lie in a compact interval \( [a, b] \subset (0, \infty) \). Then the quantity \( \frac{e^u}{e^u - 1} \) is bounded on \( [a, b] \), since it is continuous on the compact interval \( [a, b] \subset (0, \infty) \), and we define:
\begin{equation}
L_s := \sup_{u \in [a,b]} \frac{e^u}{e^u - 1}.
\end{equation}

It follows that:
\begin{equation}
|s_f(x) - s_{f'}(x)| \le L_s \cdot |\ell(f, x, y) - \ell(f', x, y)|,
\end{equation}
\end{proof}

\begin{property}[Loss–score Lipschitz link for diffusion models]

Let \( f = \epsilon_\theta \in \mathcal{F}_{\mathrm{Diff}} \) be a denoising network parameterized by \( \theta \), trained using the EDM objective (Equation \ref{edmobjj}) \cite{karras2022elucidatingdesignspacediffusionbased}. Define:
\begin{itemize}
    \item the \emph{attack score} as the scalar denoising error:
    \begin{equation}
    s_f(x) := \mathbb{E}_{\epsilon, \sigma} \left\| \epsilon_\theta(x + \sigma \epsilon, \sigma) - \epsilon \right\|^2;
    \end{equation}
    \item the \emph{training loss} as the noise-weighted EDM objective:
    \begin{equation}
    \label{edmobjj}
    \ell(f, x) := \mathbb{E}_{\epsilon, \sigma} \left[ \lambda(\sigma) \cdot \left\| \epsilon_\theta(x + \sigma \epsilon, \sigma) - \epsilon \right\|^2 \right],
    \end{equation}
    where \( \lambda(\sigma) \in [\lambda_{\min}, \lambda_{\max}] \subset (0, \infty) \) is a bounded weighting function.
\end{itemize}
Then, for any \( f, f' \in \mathcal{F}_{\mathrm{Diff}} \) and any \( x \in \mathcal{X} \), the following inequality holds:
\begin{equation}
|s_f(x) - s_{f'}(x)| \le \frac{1}{\lambda_{\min}} \cdot \left\| \ell(f, \cdot) - \ell(f', \cdot) \right\|_\infty.
\end{equation}
That is, the attack score is \( \lambda_{\min}^{-1} \)-Lipschitz with respect to the training loss.
\end{property}

\begin{proof}
Let \( f = \epsilon_\theta \) and \( f' = \epsilon_{\theta'} \) be two denoising networks in \( \mathcal{F}_{\mathrm{Diff}} \), and fix an input \( x \in \mathcal{X} \).

We define the attack score as:
\begin{equation}
s_f(x) := \mathbb{E}_{\epsilon, \sigma} \left\| \epsilon_\theta(x + \sigma \epsilon, \sigma) - \epsilon \right\|^2,
\end{equation}
and the training loss as:
\begin{equation}
\ell(f, x) := \mathbb{E}_{\epsilon, \sigma} \left[ \lambda(\sigma) \cdot \left\| \epsilon_\theta(x + \sigma \epsilon, \sigma) - \epsilon \right\|^2 \right].
\end{equation}

Let us denote:
\begin{equation}
a_f(x, \epsilon, \sigma) := \left\| \epsilon_\theta(x + \sigma \epsilon, \sigma) - \epsilon \right\|^2.
\end{equation}

Then we can write:
\begin{equation}
s_f(x) = \mathbb{E}_{\epsilon, \sigma} [a_f(x, \epsilon, \sigma)], \quad
\ell(f, x) = \mathbb{E}_{\epsilon, \sigma} [\lambda(\sigma) \cdot a_f(x, \epsilon, \sigma)].
\end{equation}

Since \( \lambda(\sigma) \in [\lambda_{\min}, \lambda_{\max}] \subset (0, \infty) \), for all $x, \epsilon, \sigma$, we have:
\begin{equation}
\lambda_{\min} \cdot a_f(x, \epsilon, \sigma) \le \lambda(\sigma) \cdot a_f(x, \epsilon, \sigma) \le \lambda_{\max} \cdot a_f(x, \epsilon, \sigma)
\quad.
\end{equation}

Taking expectation over \( (\epsilon, \sigma) \), we get:
\begin{equation}
\lambda_{\min} \cdot \mathbb{E}_{\epsilon, \sigma}[a_f(x, \epsilon, \sigma)]
\leq \ell(f, x)
\leq \lambda_{\max} \cdot \mathbb{E}_{\epsilon, \sigma}[a_f(x, \epsilon, \sigma)].
\end{equation}

By definition of \( s_f(x) = \mathbb{E}_{\epsilon, \sigma}[a_f(x, \epsilon, \sigma)] \), this gives:
\begin{equation}
\lambda_{\min} \cdot s_f(x) \leq \ell(f, x) \leq \lambda_{\max} \cdot s_f(x).
\end{equation}

Therefore, dividing through and using the positivity of \( \lambda_{\min} \) and \( \lambda_{\max} \), we conclude, pointwise in \( x \),
\begin{equation}
\lambda_{\max}^{-1} \cdot \ell(f, x)
\leq s_f(x)
\leq \lambda_{\min}^{-1} \cdot \ell(f, x).
\end{equation}

Similarly, for the difference between two models:
\begin{align}
|s_f(x) - s_{f'}(x)|
&= \left| \mathbb{E}_{\epsilon, \sigma} \left[ a_f(x, \epsilon, \sigma) - a_{f'}(x, \epsilon, \sigma) \right] \right| \\
&\le \mathbb{E}_{\epsilon, \sigma} \left| a_f(x, \epsilon, \sigma) - a_{f'}(x, \epsilon, \sigma) \right| \\
&\le \lambda_{\min}^{-1} \cdot \mathbb{E}_{\epsilon, \sigma} \left| \lambda(\sigma) \cdot a_f(x, \epsilon, \sigma) - \lambda(\sigma) \cdot a_{f'}(x, \epsilon, \sigma) \right| \\
&= \lambda_{\min}^{-1} \cdot \left| \ell(f, x) - \ell(f', x) \right|.
\end{align}

Taking the supremum over \( x \in \mathcal{X} \), we obtain:
\begin{equation}
|s_f(x) - s_{f'}(x)| \le \lambda_{\min}^{-1} \cdot \| \ell(f, \cdot) - \ell(f', \cdot) \|_\infty.
\end{equation}

Thus, the property holds with Lipschitz constant \( L_s := \lambda_{\min}^{-1} \), completing the proof.
\end{proof}

\section{Proof of Theorem 1}
\label{proofth1}

\begin{proof}
Let \( D = \{x_1, \dots, x_m\} \sim \mathcal{P}^m \) be the training dataset, and let \( x \sim \mathcal{P} \) be an independent sample. Fix \( i \in \{1, \dots, m\} \), and let \( D^{\setminus i} = D \setminus \{x_i\} \) be the neighboring dataset obtained by removing \( x_i \).

By definition, the membership advantage of the adversary \( \mathcal{A} \) is
\begin{equation}
\ADV_{\MIA} = \left| \Pr[\mathcal{A}(x_i) = 1 \mid x_i \in D] - \Pr[\mathcal{A}(x) = 1 \mid x \notin D] \right|.
\end{equation}
We analyze this by comparing the adversary’s predictions on models trained on \( D \) and on \( D^{\setminus i} \). By Lemma~\ref{lem:score-stability}, the score function satisfies
\begin{equation}
|s_{f_D}(x) - s_{f_{D^{\setminus i}}}(x)| \leq L_s \beta
\quad \text{for all } x \in \mathcal{X}.
\end{equation}
This implies that the adversary’s predictions can differ only when the score is within \( L_s \beta \) of the threshold \( \tau \), that is:
\begin{align}
\mathbb{I}\{s_{f_D}(x) &\leq \tau\} \neq \mathbb{I}\{s_{f_{D^{\setminus i}}}(x) \leq \tau\}
\quad \\
&\Rightarrow \quad
s_{f_D}(x) \in (\tau - L_s \beta, \tau + L_s \beta).
\end{align}

Define the margin region \( M = (\tau - L_s \beta, \tau + L_s \beta) \). Suppose \( s_{f_D}(x) \) admits a probability density function \( p \) bounded above by \( Q \), i.e., \( p(u) \leq Q \) for all \( u \in \mathbb{R} \). Then the difference in prediction probabilities satisfies
\begin{equation}
\label{eq:doubleineth}
\left| \Pr[\mathcal{A}(x_i) = 1] - \Pr[\mathcal{A}(x) = 1] \right| \leq \Pr[s_{f_D}(x) \in M] \leq 2 Q L_s \beta.
\end{equation}
Indeed, since \( p(u) \leq Q \), the probability mass in the margin region \( M \) is at most:
\begin{equation}
\Pr[s_{f_D}(x) \in M] \leq \int_{\tau - L_s \beta}^{\tau + L_s \beta} p(u) \, du \leq 2 Q L_s \beta.
\end{equation}
More details on Equation~\ref{eq:doubleineth} are provided below
\end{proof}

\paragraph{Comments Eq. \ref{eq:doubleineth}.}
Let's give more details on how we obtained the following:
$$
\left| \Pr[\mathcal{A}(z_i) = 1] - \Pr[\mathcal{A}(z) = 1] \right| \leq \Pr[s_{f_D}(x) \in M] \leq 2 Q L_s \beta,
$$
where \( \mathcal{A}(x) = \mathbb{I}\{s_{f_D}(x) \leq \tau\} \) and \( M = [\tau - L_s \beta, \tau + L_s \beta] \).

\paragraph{First inequality.}
Let \( s(x) := s_{f_D}(x) \) and \( s'(x) := s_{f_{D^{\setminus i}}}(x) \). Since the only difference in the adversary’s behavior arises from training on or excluding \( z_i \), the output of \( \mathcal{A} \) can only change if the score lies near the threshold. We formalize this as:
\begin{align}
&\left| \Pr[\mathcal{A}(z_i) = 1] - \Pr[\mathcal{A}(z) = 1] \right| \\
&= \left| \mathbb{E}_{x_i}[\mathbb{I}\{s(x_i) \leq \tau\}] - \mathbb{E}_{x}[\mathbb{I}\{s'(x) \leq \tau\}] \right| \\
&\leq \mathbb{E}_{x \sim \mathcal{P}} \left| \mathbb{I}\{s(x) \leq \tau\} - \mathbb{I}\{s'(x) \leq \tau\} \right| \\
&\leq \mathbb{E}_{x \sim \mathcal{P}} \left[ \mathbb{I}\{ |s(x) - \tau| \leq |s(x) - s'(x)| \} \right] \\
&\leq \Pr\left[ |s(x) - \tau| \leq L_s \cdot \| \ell(f_D, \cdot) - \ell(f_{D^{\setminus i}}, \cdot) \|_\infty \right] \\
&\leq \Pr\left[ s_{f_D}(x) \in [\tau - L_s \beta, \tau + L_s \beta] \right] = \Pr[s_{f_D}(x) \in M],
\end{align}
where we used the Lipschitz assumption on the score and the uniform stability bound \( \| \ell(f_D, \cdot) - \ell(f_{D^{\setminus i}}, \cdot) \|_\infty \leq \beta \).

\paragraph{Second inequality.}
The bound \( \Pr[s_{f_D}(x) \in M] \leq 2 Q L_s \beta \) requires a regularity condition on the distribution of the score \( s_{f_D}(x) \). We assume \( s_{f_D}(x) \) admits a probability density function \( p \) bounded above by some constant \( Q \), then
\begin{equation}
\Pr[s_{f_D}(x) \in M] \leq Q \cdot |M| = 2Q L_s \beta.
\end{equation}
Alternatively, the inequality may be interpreted in a worst-case sense, assuming that the measure of any margin region of width \( 2L_s \beta \) is bounded proportionally.

In our settings, the assumption is satisfied in practice. For diffusion models trained via the EDM objective, the score \( s_{f_D}(x) \) is the expected denoising error, which is a smooth, noise-averaged functional of the input and thus likely admits a bounded density on \( \mathbb{R}_+ \). For GANs, the score is typically the (logit) output of the discriminator, which is a continuous function of \( x \) and similarly expected to induce a smooth distribution. In both cases, the bounded-density assumption required for the margin bound holds in practice.


\section{Additional Discussion on Theorem~\ref{thm:adv-bound}}
\label{app:th1tradeoff}

The bound established in Theorem~\ref{thm:adv-bound} states that, under uniform stability and Lipschitz continuity of the score with respect to the loss, the membership advantage of any threshold-based adversary is bounded as
\begin{equation}
\ADV_{\MIA} \leq 2 Q L_s \beta.
\end{equation}
To ensure that this bound is non-trivial (i.e., strictly less than 1), it is necessary that \(2 Q L_s \beta < 1\). This condition introduces a trade-off between the score sensitivity \(L_s\), the stability \(\beta\), and the score density upper bound \(Q\), which we now analyze in more detail.

\vspace{0.5em}
\paragraph{Stability \(\beta\).} The uniform stability parameter \(\beta\) quantifies how much the loss \(\ell(f_D, z)\) changes when one training point is removed from the dataset. For DP-SGD with per-sample gradient clipping at norm \(C\), the expected uniform stability can be upper bounded as
\begin{equation}
\beta = \mathcal{O}\left(\frac{1}{m} \sum_{t=1}^T \alpha_t\right),
\end{equation}
where \(m\) is the dataset size, \(T\) is the number of training steps, and \(\alpha_t\) are the learning rates. Hence, increasing \(m\) or decaying the learning rate can help reduce \(\beta\), thereby tightening the membership advantage bound.

\vspace{0.5em}
\paragraph{Lipschitz constant \(L_s\).} The constant \(L_s\) reflects the sensitivity of the attack score to changes in the loss. Its value depends on the model architecture and the type of score used by the attacker:
\begin{itemize}
    \item In diffusion models, the score is the per-sample denoising error \(s_f(x) = \|\epsilon_\theta(x + \sigma \epsilon, \sigma) - \epsilon\|^2\), directly derived from the loss. However, due to the multiplicative noise and squared error scaling, \(L_s\) can be large, especially when the noise level \(\sigma\) is small.
    \item In GANs, the attack score is the raw discriminator logit \(D(x)\), and the loss is binary cross-entropy with logits. Thus, \(L_s\) corresponds to the inverse derivative of the sigmoid and may remain moderate depending on the activation range of the discriminator.
\end{itemize}
Large \(L_s\) weakens the bound and may dominate the overall expression when the score is highly sensitive to training perturbations.

\vspace{0.5em}
\paragraph{Density bound \(Q\).} The constant \(Q\) assumes that the score \(s_f(x)\) admits a probability density function bounded above by \(Q\). This assumption holds for most smooth neural networks with continuous outputs, and \(Q\) reflects the worst-case concentration of the score distribution. In practice, \(Q\) is often moderate unless the score is extremely peaked.

\vspace{0.5em}
\paragraph{Implication.} The bound \(\ADV_{\MIA} \leq 2 Q L_s \beta\) is informative when all three factors are controlled. In particular, for a fixed model class, reducing \(\beta\) via larger datasets or improved stability (e.g., via regularization or differential privacy) is essential to keep the membership advantage small. At the same time, careful design of the score function (e.g., smooth denoising metrics, logit clipping) may help reduce \(L_s\). This trade-off reflects the fundamental connection between algorithmic stability and the susceptibility of a model to inference attacks.

\section{Additional Comments on Common Randomness (Lemma 2)}
\label{cr}
\textbf{Coupled noise.} Uniform stability compares two runs of DP--SGD on neighbouring datasets
\(D\) and \(D^{\setminus i}\) that differ in one example.
To isolate the \emph{data} effect, we \emph{couple} the randomness:
the two executions use the \textbf{same} mini--batches
\(B_t\) and the \textbf{same} Gaussian noise vectors
\(\eta_t\sim\mathcal N(0,\sigma^2 I)\) for every step~\(t\).
With this coupling, the parameter updates are
\begin{equation}
\theta_{t+1}=\theta_t-\alpha_t\!\bigl(g_t+\eta_t\bigr),
\qquad
\theta'_{t+1}=\theta'_t-\alpha_t\!\bigl(g'_t+\eta_t\bigr),
\end{equation}
where
\(g_t:=\tfrac1b\sum_{j\in B_t}\!\operatorname{clip}\bigl(\nabla\ell(f_{\theta_t}, z_j),C\bigr)\)
and likewise for \(g'_t\).
Let \(\Delta_t:=\theta_t-\theta'_t\).
Because the noise terms cancel,
\begin{equation}
\Delta_{t+1}
=\Delta_t-\alpha_t\,(g_t-g'_t),
\end{equation}
and thus \(\|\Delta_{t+1}\|\le\|\Delta_t\|+\alpha_t\,C/m\),
since the differing example appears in the batch with probability \(b/m\).
Iterating over \(T\) steps yields
\begin{equation}
\|\Delta_T\|\;\le\;\frac{C}{m}\sum_{t=1}^T\alpha_t.
\end{equation}
If \(\ell(\cdot;z)\) is \(L\)-Lipschitz in~\(\theta\),
\begin{equation}
|\ell(f_D,z)-\ell(f_{D^{\setminus i}},z)|
\;\le\;
L\,\|\Delta_T\|
\;\le\;
\frac{LC}{m}\sum_{t=1}^T\alpha_t,
\end{equation}
which gives the classical bound
\(
\displaystyle
\beta\le\frac{2LC}{m}\sum_{t=1}^T\alpha_t.
\)

\medskip\noindent
\textbf{Uncoupled noise.}
If the two runs draw \emph{independent} noise
\(\eta_t\) and \(\eta'_t\),
then
\(
\Delta_{t+1}=\Delta_t-\alpha_t(g_t-g'_t)-\alpha_t(\eta_t-\eta'_t),
\)
so \(\|\Delta_T\|\) acquires an additional
random-walk term of order \(\alpha\sigma\sqrt{T}\).
A typical uncoupled bound therefore becomes
\begin{equation}
\beta \leq
\frac{2LC}{m}\sum_{t=1}^T\alpha_t+\mathcal{O}(\alpha\sigma\sqrt{T}\bigr),
\end{equation}
explicitly reflecting the dependence on the DP noise scale~\(\sigma\).

\section{Implementation Details}
\label{expeap}
For GANs, we follow the training procedure and hyperparameter configuration from \citet{bie2023privategansrevisited}, including the same discriminator and generator architectures, optimizer settings, and training schedule. For diffusion models, we build upon the DP EDM-based setup introduced by \citet{dockhorn2023differentially}, with minor architectural simplifications to ensure stable training on a single GPU. Specifically, we reduce the base number of channels from 128 to 32, use fewer residual blocks per resolution (2 instead of 4), adopt a simplified channel multiplier schedule of $[1, 1, 1, 1]$, and set the embedding channel multiplier to $4$. We also restrict attention to the lowest spatial resolution ($4 \times 4$ instead of $16 \times 16$). We use a fixed dropout rate of 0.1 and a batch size of 128. The model is trained using DP-SGD for 300 epochs with a learning rate of 0.0003 across all privacy levels ($\varepsilon \in \{\infty, 10, 5, 1\}$). To improve training signal, we use a noise multiplicity of 32 loss terms per image, sampled at varying noise levels. We train 20 independent shadow models. At inference, we generate samples using 150 denoising steps with sampling parameters $t_{\min} = 0.002$, $t_{\max} = 80$, $\rho = 7.0$, and guidance scale $3.0$.

\section{Attack Evaluation}

\label{app:attackeval}

To assess the effectiveness of membership inference attacks, we report five standard classification metrics: accuracy, precision, true positive rate (TPR), false positive rate (FPR), and area under the ROC curve (AUC). Accuracy measures the overall proportion of correctly classified examples (both members and non-members). Precision reflects the proportion of true members among the samples predicted as members, capturing the attacker's confidence in positive predictions. TPR (also known as recall or sensitivity) quantifies the fraction of true members correctly identified by the attack. FPR measures the fraction of non-members that are incorrectly predicted as members, and should ideally remain low. Finally, AUC evaluates the attack’s ability to distinguish members from non-members across all possible thresholds; it is a threshold-independent metric where a value of 0.5 corresponds to random guessing. Higher values of accuracy, precision, TPR, and AUC indicate stronger attack performance, whereas lower FPR values are preferable.

\section{Notations}

\begin{table}[H]
\centering
\label{tab:notations}
\begin{tabular}{lll}
\toprule
\textbf{Symbol} & \textbf{Type} & \textbf{Description} \\
\midrule
$\mathcal{X}$ & Space & Input space (e.g., images) \\
$\mathcal{Y} \subset \mathbb{R}$ & Space & Output space (e.g., logits, scores) \\
$\mathcal{P}$ & Dist. & Data distribution \\
$D = \{x_i\}_{i=1}^m$ & Dataset & Training set of size $m$ \\
$D^{\setminus i}$ & Dataset & $D$ without the $i$-th point \\
$f$ & Alg. & Learner $f : \mathcal{X}^m \to \mathcal{F}$ \\
$f_D$ & Model & Model trained on $D$ \\
$\mathcal{F}$ & Space & Hypothesis class (e.g., denoisers) \\
\midrule
$s_f(x)$ & Score & Scalar attack score on $x$ \\
$\ell(f, x)$ & Loss & Per-sample training loss \\
$\mathcal{A}(x)$ & Attack & Binary decision from $s_f(x)$ \\
$\ADV_{\mathrm{MIA}}$ & Metric & Membership advantage \\
$\mathcal{C}(s)$ & Attack & Classifier \\
\midrule
$D_\psi(x)$ & GAN & Discriminator logit \\
$G_\phi(z)$ & GAN & Generator output from noise $z$ \\
$\epsilon_\theta(x_\sigma, \sigma)$ & Diff. & Denoising network \\
$x_\sigma = x_0 + \sigma \epsilon$ & Diff. & Noisy input (forward process) \\
$\lambda(\sigma)$ & Diff. & EDM weighting function \\
$K$ & Diff. & Noise multiplicity (passes per sample) \\
$\bar{\lambda}$ & Diff. & Average EDM weight \\
$B$ & Diff. & Upper bound on prediction error \\
\midrule
$(\varepsilon, \delta)$ & DP & Privacy parameters \\
$\sigma$ & DP/Diff. & Noise scale (context-dependent) \\
$C$ & DP & Gradient clipping norm \\
$\alpha_t$ & DP & Learning rate at step $t$ \\
$T$ & DP & Number of training steps \\
$b$ & DP & Mini-batch size \\
$B_t$ & DP & Batch at iteration $t$ \\
$\eta_t$ & DP & Gaussian noise at step $t$ \\
\midrule
$\beta$ & Stability & Uniform stability coefficient \\
$L_s$ & Stability & Lipschitz const. (score vs loss) \\
$L, L_G, L_D$ & Stability & Lipschitz const. of loss wrt params \\
\bottomrule
\end{tabular}
\caption{Summary of Notations}
\end{table}

\end{document}